\def\FORARXIVACL{1}
\documentclass[11pt]{article}

\newif\ifvenuetmlr    \venuetmlrfalse
\newif\ifvenueneurips \venueneuripsfalse
\newif\ifvenueblackbox \venueblackboxfalse
\newif\ifaclpreprint   \aclpreprintfalse
\newif\ifanon         \anonfalse        %
\ifdefined\FORNEURIPS\anontrue\fi        %
\ifdefined\FORBLACKBOX\venueblackboxtrue\anontrue\fi
\ifdefined\FORARXIVACL\venueblackboxtrue\aclpreprinttrue\fi

\ifvenueblackbox
  \ifaclpreprint
    \usepackage[preprint]{acl}
  \else
    \usepackage[review]{acl}
  \fi
\fi

\usepackage[utf8]{inputenc}
\usepackage[T1]{fontenc}
\ifvenueblackbox
  \usepackage{times}
  \usepackage{latexsym}
  \usepackage{inconsolata}
\else
  \usepackage{lmodern}   %
\fi
\usepackage{amsmath,amssymb,amsthm}
\usepackage{booktabs}
\usepackage{graphicx}
\usepackage{xcolor}
\usepackage{microtype}
\ifvenueblackbox
\else
  \usepackage[numbers,sort&compress]{natbib}
  \usepackage[hidelinks]{hyperref}
\fi

\newcommand{\conf}{\mathrm{conf}}
\newtheorem{proposition}{Proposition}

\title{Solver-Hard Is Not Model-Hard: A Hardness-Controlled Diagnostic
for LLM Constraint Reasoning}

\author{\ifanon
  Anonymous Authors\\
  \textit{Paper under double-blind review}
\else
  Lucky Verma\\
  Independent Researcher\\
  \texttt{luckyv1@umbc.edu}
\fi
}
\date{}

\begin{document}
\maketitle

\begin{abstract}
LLM constraint reasoners are often evaluated near the random-SAT phase transition,
confounding density and solver hardness. We test instance-level transfer while
near-matching clause density. At aligned size bins,
with near-matched density and matched maximum clause width, we compare proof-hard
expander-Tseitin and proof-easy ladder-Tseitin formulas, pigeonhole anchors, and
density-mismatched controls. Theory separates their resolution hardness; a solver-specific
Glucose mean-conflict proxy differs by up to $51\times$, and five other solvers preserve the
direction. Across three included models (243 instances each; a fourth is excluded for
abstention), the near-matched-density
accuracy gaps range from $-32$ to $+20$ points, with a pooled gap of $+1.7$ points
($p=0.74$) and a wrong-signed correctness--conflict association ($r=+0.15$).
A proof-preserving relabeling lowers accuracy in all
five clusters for one model (mean $-93$ points) but not another, exposing model-specific
surface sensitivity. In a pre-registered extension, provider-reported completion-token spend does not
consistently increase with the proxy after accounting for formula length and censoring. At
16k, the reasoning model spends more on proof-easy matched formulas and
exhausts its budget on the solver-easiest UNSAT family; the 32k C1 gap is absent.
These scoped dissociations concern verdict accuracy and observed token spend, not
certificate solving, exact proof length, or allocation efficiency.
\end{abstract}

\begin{figure*}[t]\centering
  \includegraphics[width=\textwidth]{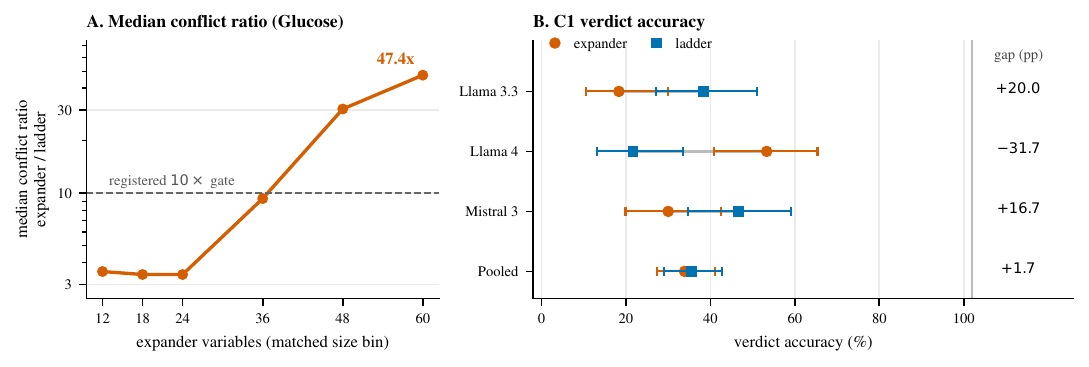}
  \includegraphics[width=\textwidth]{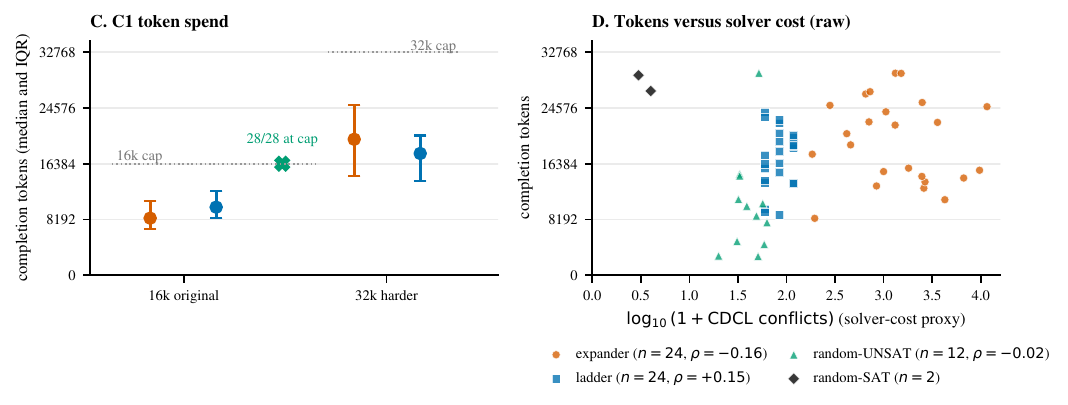}
  \caption{Four views of the solver--model dissociation. \emph{A:} at matched maximum
  width and near-matched density, the pre-specified median Glucose-conflict criterion between proof-hard
  expander and proof-easy ladder Tseitin clears $10\times$ in the two largest bins
  (Section~\ref{sec:stepa}). \emph{B:} the ladder-minus-expander accuracy gap changes
  sign across models and is $+1.7$ points pooled ($p=.74$); orange circles denote
  expander and blue squares ladder. \emph{C:} among non-abstained, uncensored rows, a reasoning model
  spends more tokens on ladder at 16k (raw $p=.0115$; post-hoc two-budget Bonferroni
  $p=.0231$), while the 32k contrast is null (raw $p=.170$). The X separately marks
  28/28 non-abstained random-UNSAT rows censored at the 16,384-token cap.
  \emph{D:} the raw 32k scatter reports per-family $n$ and Spearman $\rho$; registered
  length-controlled results are in Section~\ref{sec:compute}. Bars in B are Wilson
  95\% intervals and bars in C are IQRs. C's bars and D exclude abstained and
  budget-censored rows (C: expander/ladder $n=53/48$ at 16k, $n=24/24$ at 32k).}
  \label{fig:money}
  \label{fig:compute}
\end{figure*}

\begin{figure*}[t]\centering
  \includegraphics[width=0.92\textwidth]{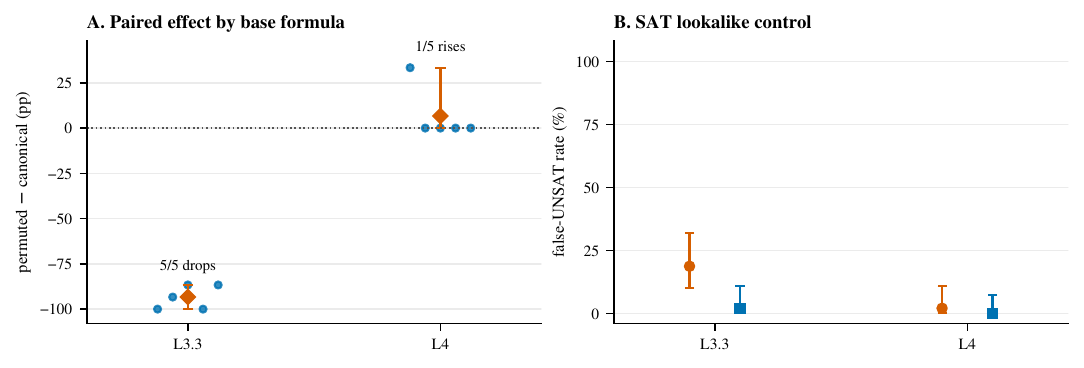}
  \caption{Surface dependence is strong for one model and absent as a common law.
  \emph{A:} each blue point is the permuted-minus-canonical accuracy effect for one of
  five base PHP formulas; orange diamonds are means and bars span the five observed
  cluster effects. \emph{B:} false-UNSAT rate on 48 satisfiable PHP-lookalikes (orange
  circles) versus 48 size-and-density-matched random-SAT controls (blue squares), with
  Wilson 95\% intervals. L3.3 is Llama~3.3 and L4 is Llama~4.}
  \label{fig:surface}
\end{figure*}

\section{Introduction}
A growing literature reports that neural reasoners fail on Boolean satisfiability (SAT)
and related constraint-satisfaction problems as instances approach the satisfiability
phase transition, and a second literature attributes such limits to model architecture,
contrasting autoregressive transformers with recurrent, diffusion, or energy-based
alternatives. Evaluations used to support either account often leave the same
density--hardness confound unresolved.

On uniformly random formulas the clause-to-variable density $\alpha = m/n$ and the
algorithmic hardness of the instance co-vary near the phase transition. A model
that uses density as a surface cue can therefore reproduce a qualitative ``fails
near the threshold'' curve without tracking instance-level search cost. Testing
whether a structural hardness axis orders model failures requires a construction
that separates it from density.

We decouple them with controlled dissociations. The headline Tseitin pair shares
maximum clause width and near-matched density while graph structure changes the known
resolution-hardness ordering: expander formulas require exponential resolution
\citep{urquhart1987,bsw2001}, whereas ladder formulas (treewidth two) have polynomial
proofs. Pigeonhole formulas provide a second exponential-resolution anchor
\citep{haken1985}, and random-UNSAT formulas provide a high-density, low-CDCL-cost
control; these auxiliary families are not density-matched pairs. The empirical hardness measure is the per-instance conflict count of
a conflict-driven clause-learning (CDCL) solver, used as a solver-specific proxy for
refutation cost rather than as the shortest proof length itself.

This design directly adjudicates two instance-level accounts: clause density and
proof-complexity transfer \citep{jiangcai2026}. Architecture-level accounts
\citep{arch_limits2507} motivate the broader question but require a cross-paradigm panel;
our all-autoregressive evaluation does not test them. Holding the model fixed while
near-matching density, matching maximum width, and intervening on graph structure supplies a
controlled test of whether the resulting proof-hardness ordering predicts model
outcomes. Graph topology necessarily co-varies with formal proof hardness, so this is
not a claim that proof length is the only changed property. The claim-bearing question
is whether LLM accuracy on these
formula families tracks proof complexity, tracks the clause-density proxy, or tracks
neither; the evaluation object is per-instance SAT/UNSAT accuracy at near-matched density and
maximum width, scored against the classical-solver oracle, not a downstream task or benchmark
leaderboard position.

\paragraph{Contributions.}
\begin{enumerate}
  \item A construct-validity diagnostic that separates proof complexity from clause density
        at matched maximum width and near-matched density. On the solver, its hard/easy separation grows to
        $51\times$ (Sections~\ref{sec:dissociation}--\ref{sec:stepa}).
  \item A pre-specified three-model refutation: the predicted accuracy ordering changes
        sign across models, while a proof-preserving relabeling exposes a model-specific
        surface-form confound (Sections~\ref{sec:phase1}--\ref{sec:hardness}).
  \item A pre-registered extension showing that observed completion-token spend also fails to
        track the solver-cost proxy across aligned size bins and near-matched density on the tested models and
        budgets (Section~\ref{sec:compute}).
\end{enumerate}

\section{The dissociation, stated precisely}\label{sec:dissociation}
We distinguish two properties of an UNSAT instance that are confounded on random formulas.

\paragraph{Clause density (the surface proxy).} For a CNF formula with $n$ variables and
$m$ clauses, $\alpha = m/n$. On random $k$-SAT, $\alpha$ controls the satisfiability
phase transition \citep{ding2015,mezard2009} and is readable from the formula without
search. It is the variable a pattern-matcher can exploit.

\paragraph{Proof complexity (the structural variable).} The length of the shortest
resolution refutation of an UNSAT formula. By the size--width relation
\citep{bsw2001}, proof size is exponential in the resolution width, which is in turn
controlled by combinatorial expansion of the underlying structure. We use the
CDCL conflict count of Glucose as a solver-specific empirical proxy: CDCL learns
clauses through resolution, but its realized conflict count is neither the shortest
resolution proof nor an implementation-independent complexity measure.

On random formulas these two co-vary, which is why they cannot be told apart there.
We engineer families in which they are decoupled, and form three contrasts.

\paragraph{C1 (near-matched-density expansion contrast, the headline).} Tseitin formulas
encode a parity constraint per graph vertex; the formula is UNSAT when the total
charge is odd. On a $3$-regular expander, refutation requires exponential resolution
\citep{urquhart1987,bsw2001}; on a ladder graph (treewidth two) it is polynomial.
Both encodings share maximum clause width three, label, and, by construction, nearly
identical clause density; the ladder's four degree-two corner vertices contribute eight
width-two clauses per instance (20\% of its clauses in the evaluated size range), while
every expander clause has width three. The designed intervention is graph expansion,
which changes known resolution hardness; topology, including this clause-length
difference, is therefore a co-varying structural feature visible in the formula. Prediction: if this proof-hardness ordering transfers, accuracy on ladder
exceeds accuracy on expander at matched size.

Writing $C_s(F)$ for solver $s$'s conflict count, we keep the descriptive mean ratio
and the pre-specified median statistic distinct:
\[
\begin{aligned}
R^{\mathrm{mean}}_s(b)
  &=\frac{\mathbb{E}[C_s(F)\mid F\in\text{expander}(b)]}
          {\mathbb{E}[C_s(F)\mid F\in\text{ladder}(b)]},\\
R^{\mathrm{med}}_s(b)
  &=\frac{\operatorname{med}_{F\in\text{expander}(b)}C_s(F)}
          {\operatorname{med}_{F\in\text{ladder}(b)}C_s(F)}.
\end{aligned}
\]
The $10\times$ apparatus criterion applies to $R^{\mathrm{med}}$; the mean ratio is
descriptive.
For model $M$, the pre-specified transfer estimand is
$\Delta_M=A_M(\text{ladder})-A_M(\text{expander})$. The hypothesis requires
$R^{\mathrm{mean}}_s(b),R^{\mathrm{med}}_s(b)>1$ on the solver and $\Delta_M>0$ on the model; Figure~\ref{fig:money} makes
their opposing empirical behavior visible.

\paragraph{C2 (density is anti-predictive).} The Tseitin expander family has density
$\alpha \approx 2.67$; a random-UNSAT family at $\alpha \approx 6$ has low Glucose
conflict counts at the sizes we test. Density ranks the high-$\alpha$ random family as
harder, whereas the solver-cost proxy ranks the low-$\alpha$ expander as harder.
Prediction: accuracy on the
expander is lower than on the denser random-UNSAT family.

\paragraph{C3 (pigeonhole anchor).} Pigeonhole $\mathrm{PHP}^{n+1}_n$ requires
exponential resolution \citep{haken1985} and provides a second proof-hard family with
no graph structure, contrasted against random-UNSAT at matched variable count.

\paragraph{Falsifiable prediction and gate.} If failure is governed by proof
complexity rather than clause density, then (i) the C1 accuracy gap is positive and
large at near-matched density, (ii) the C2 gap shows density inverting, and (iii) per-instance
correctness is negatively associated with $\log_{10}(\conf)$ pooled across families.
The author-recorded, pre-specified decision rule requires, in at least two of three models, a C1 gap
$\geq 20$ percentage points with either non-overlapping Wilson 95\% confidence intervals
or a two-proportion $p<.05$. It additionally requires a significant negative
correctness--$\log_{10}(\conf)$ association pooled across families and treats
near-zero random-SAT accuracy as an uninformative UNSAT-bias failure. A gap below $10$
points rejects the pre-specified transfer prediction. The implemented scorer narrows the
association to UNSAT rows to avoid the SAT/conflict-zero contrast and checks its sign rather
than significance. We disclose both departures; neither can rescue the hypothesis because
the UNSAT-only coefficient is positive and no model clears the full transfer gate.

\section{Apparatus}\label{sec:apparatus}
\paragraph{Families.} The near-matched-density contrast pairs UNSAT Tseitin formulas on
3-regular expanders (exponential resolution hardness) with Tseitin
formulas on ladder graphs (treewidth two, polynomial proofs). Pigeonhole formulas provide
a second proof-hard anchor. Random UNSAT formulas at $\alpha\approx6$ provide a
high-density, low-conflict control, and random SAT formulas at $\alpha\approx3$ measure
verdict bias. Ground truth is mechanical. All instances use held-out seeds
($\geq1000$) and carry generator, solver, and code provenance; Appendix
Table~\ref{tab:families} lists the full roles.

\paragraph{Proof-complexity proxy.} Every instance is solved by Glucose~4.2 (via
PySAT); we record the conflict, decision, and propagation counts from the solver's
accumulated statistics. Conflicts are the primary proxy because CDCL clause learning
can be represented as resolution; the realized count remains solver- and
implementation-dependent and is not the shortest refutation size. The classical solver
also serves as the exact label oracle on these generated instances and as a runtime
baseline.

\paragraph{Near-matched-density construction.} Each expander instance uses a connected random
3-regular graph on $V$ vertices ($1.5V$ edge variables, $4V$ clauses,
$\alpha\approx 2.67$), sampled from an ensemble that is expanding with high
probability; we do not certify per-instance expansion. The ladder control uses
\texttt{ladder\_graph}\allowbreak$(V/2)$ at the same maximum width and nearly the
same density. Charges are random with an enforced odd sum, which forces UNSAT on the
connected graph. Random families are matched to the Tseitin variable counts. Because
\texttt{ladder\_graph}$(m)$ yields about two fewer variables than the matched
3-regular expander, the C1 contrast pairs families by size bin
($\lfloor n/6 \rceil$), not by exact variable count.

\paragraph{Shared harness.} The autoregressive arm uses one frozen prompt and parser
(released as \path{scripts/prompt_harness.py}) across every family:
each instance is rendered as DIMACS CNF, the model is asked to decide satisfiability
and emit \texttt{VERDICT: SAT}/\texttt{UNSAT} at temperature zero, and the verdict line
is parsed; a response without a parseable verdict line is an abstention. The primary
endpoint is therefore verdict accuracy, not certificate-verified
SAT solving. Hosted-model calls used NVIDIA NIM's OpenAI-compatible API; this serving
backend is part of the protocol but not an internal-compute measurement. Using the
identical harness makes the comparison a property of the instances rather than a
family-specific prompt. The API and solver arms log JSON with provenance;
no checkpoints are produced.

No predictive model is fitted, so there is no train/test split or learned decision
threshold. All protocol-specified generated instances are evaluation rows with generator seeds
held out from apparatus development. The solver supplies the binary SAT/UNSAT target label;
within each included-model cohort an abstention remains in the denominator and is scored as
incorrect. The pre-specified contrast thresholds determine the paper-level gate, while
per-family uncertainty is reported with Wilson 95\% confidence intervals.

\paragraph{Reproducibility.} The public release at
\ifanon
an anonymized repository (URL withheld for review)
\else
\url{https://github.com/lucky-verma/solver-hard-is-not-model-hard}
\fi
contains the
generator, CDCL scorer, LLM harness, pinned statistics receipts, per-instance
derived measurements, and claim-to-result manifest. Generated response text is
not redistributed. No model checkpoints are produced: the reproduction units are
generator seed/version pairs and provider identifiers with run timestamps. Provider
APIs do not expose immutable weight hashes, so exact weight-level reproduction depends
on provider stability.

\section{The hardness separation appears on the solver}\label{sec:stepa}
Before evaluating models, we verify that the construction separates solver cost from
clause density. Figure~\ref{fig:money}A and Appendix Table~\ref{tab:stepa-full} report
the frozen solver-scaling set (10 instances per nondeterministic cell).

At matched maximum width and near-matched density, the mean Glucose-conflict ratio between expander
and ladder Tseitin grows monotonically across the six size bins:
$3.3\times$, $3.5\times$, $3.6\times$, $10.2\times$, $28.5\times$, and
$51.4\times$. At about 60 variables, their median conflict counts are 5,550 and
117. The high-density random-UNSAT control ($\alpha=6$) is easier still (median 43.5),
despite having $2.25\times$ the expander's density. The second theoretical anchor,
pigeonhole, rises from 7 to 7,100 conflicts between 12 and 56 variables, consistent
with its established resolution lower bound; this finite sequence does not estimate
an asymptotic exponent.

A locked post-primary sensitivity repeats all 245 instances with five additional
CDCL implementations. All 1,470 labels agree, and every solver preserves the large-bin
direction: the weakest expander/ladder ratio from 36 variables onward is $7.0\times$;
the non-Glucose medians are $18.4\times$, $47.7\times$, and $165.2\times$
(Appendix Table~\ref{tab:multisolver}). Conflict scales remain solver-specific.

The pre-specified solver-side criterion is a median expander-to-ladder conflict ratio of at
least $10\times$. The protocol did not state how many scaling bins had to clear, so we
report the full curve rather than convert it into a binary confirmatory pass. Figure~\ref{fig:money}A
plots the six median ratios: $3.6\times$, $3.4\times$, $3.4\times$, $9.3\times$, $30.4\times$,
and $47.4\times$, clearing the threshold only at the two largest bins. The descriptive
mean-ratio series reported above crosses $10\times$ from the 36-variable bin onward.
The protocol's PHP--random-UNSAT criterion also clears at large size: 7,100 versus
43.5 median conflicts at 56 and 60 variables, respectively.
Thus the
apparatus supplies the ordering the model test requires: density stays approximately fixed while the
solver-specific refutation-cost proxy diverges. The model test asks whether accuracy
inherits that ordering at the smaller sizes where verdict accuracy remains measurable.

\section{The dissociation does not transfer to LLMs}\label{sec:phase1}
We evaluate Llama~3.3 70B Instruct, Llama~4 Maverick 17B-128E Instruct, and
Mistral Large~3 675B Instruct at temperature zero; exact provider identifiers are
listed in Appendix~\ref{sec:repro-app}.
Each model receives the same 243 instances: 60 from each large family and three
pigeonhole anchors. Abstentions remain in the denominator and count as incorrect.
We report Wilson 95\% intervals, two-sample proportion tests for the pre-specified
contrasts, and point-biserial correlation between correctness and
$\log_{10}(1+\conf)$. The per-model gate is primary; pooled $p$-values aggregate
response rows descriptively and do not support population inference over models.
A fourth model, \texttt{gpt-oss-120b}, abstains on 189/243
instances (77.8\%) and is excluded from the primary pooled summaries under a predeclared
abstain-dominated contingency. Including it yields a pooled $+2.5$-point C1 gap
($p=.55$) and zero of four models passing the gate (Section~\ref{sec:limits}).

\begin{table}[t]\centering\footnotesize\setlength{\tabcolsep}{3.2pt}
\caption{Verdict accuracy by family. C1 is ladder minus expander; parenthetical values
are raw two-sided $p$-values. Full Wilson intervals are in Appendix
Table~\ref{tab:phase1-full}.}
\label{tab:phase1}
\begin{tabular}{@{}lrrr@{}}
\toprule
family & Llama 3.3 & Llama 4 & Mistral 3 \\
\midrule
random-SAT & .92 & 1.00 & .38 \\
expander & .18 & .53 & .30 \\
ladder & .38 & .22 & .47 \\
random-UNSAT & .27 & .00 & .02 \\
PHP ($n{=}3$) & 1.00 & .33 & 1.00 \\
\midrule
C1 gap & +.20 (.015) & $-$.32 ($<$.001) & +.17 (.060) \\
\bottomrule
\end{tabular}
\end{table}

\paragraph{The near-matched-density ordering does not transfer (C1).}
The ladder-minus-expander gap is $+20$ points for Llama~3.3, $-32$ for Llama~4,
and $+17$ for Mistral~3 (Figure~\ref{fig:money}B). Pooled, it is $+1.7$ points
($p=0.74$). Two models point in the predicted direction, but neither satisfies the
pre-specified magnitude, interval, and correlation conditions jointly; the third reverses
the effect significantly. No included or excluded model reopens the transfer account.

\paragraph{Neither alternative hardness proxy orders accuracy.}
The densest UNSAT family, random-UNSAT ($\alpha=6$), is answered worst or near-worst
by every model, even though it is the easiest UNSAT family for Glucose. Conversely,
the proof-hard PHP anchor is answered most accurately by two models. PHP contains only
three instances per model and is corroborating rather than load-bearing. Across all
UNSAT rows, correctness has a wrong-signed pooled association with solver cost
($r=+0.15$); the per-model values are $+0.03$, $+0.28$, and $+0.16$. Thus neither
clause density nor the conflict proxy induces a consistent error ordering.

\paragraph{Satisfiability status is the dominant pooled contrast.}
The two low-abstention models score .92 and 1.00 on the satisfiable control, while
their larger UNSAT cells are substantially weaker. Mistral~3 scores .38 on that control
because it abstains on 58\% of those prompts; its pooled abstention rate is 29.2\%,
versus below 0.5\% for the two Llama models. The pooled included-model abstention rate
is 9.9\% (72/729). This is consistent with a witness-availability asymmetry, but a
verdict-only endpoint cannot establish that a model constructed or checked a satisfying
assignment. The remaining family differences motivate the proof-preserving surface test
in Section~\ref{sec:hardness}.

\section{Solver hardness does not transfer to model hardness}\label{sec:hardness}
The experiment tests one transfer assumption: an ordering that separates a SAT solver
should also order a model's verdict accuracy on the same instances. Glucose exhibits a
growing, $51\times$ near-matched-density separation; the three LLMs exhibit a sign-changing
accuracy contrast and no negative correctness--conflict association. In this regime,
solver-hard is not model-hard.

\paragraph{A proof-preserving surface intervention.}
To probe the residual family effects, we pre-specified a second control on pigeonhole
formulas. For each $n\in\{3,\ldots,7\}$, we retain one canonical serialization and create
15 versions by bijectively relabeling variables and shuffling clauses and literals. These
operations preserve the formula, satisfiability, and minimum resolution length/width
while changing only its serialized surface (Appendix Proposition~\ref{prop:surface-invariance}).
We treat the base formula, not each rendering, as the
independent unit. Llama~3.3 drops in all five paired clusters by 86.7--100 points
(mean $-93.3$; raw accuracies 1.00 versus .067). Llama~4 changes only for $n=3$ and is
zero in the other four clusters (mean $+6.7$; raw .00 versus .067). Thus a logically
irrelevant surface change can alter one model's verdicts sharply, but it does not induce a
common law across models (Figure~\ref{fig:surface}A).

A pre-specified SAT control reaches the same boundary. We compare satisfiable pigeonhole
lookalikes with random-SAT formulas matched on variable count and density. For Llama~3.3,
the false-UNSAT rate is .188 versus .021 over 48 instances per family (a $+16.7$-point
shift, $p=.0075$); for Llama~4 the shift is only $+2.1$ points. The result therefore
misses the pre-specified five-model causal-recognizability gate ($\geq25$ points with
non-overlapping Wilson intervals in at least two of five planned models); neither completed
model qualifies. We conclude surface dependence, not template recognition as a uniform mechanism
(Figure~\ref{fig:surface}B).

\paragraph{What the negative rules out.}
On these families, models, and scales, neither clause density nor the Glucose-conflict
proxy is a reliable predictor of verdict accuracy. This is a boundary on those predictors,
not a claim that density or proof structure is irrelevant to every formula distribution or
scale. Graph topology necessarily co-varies with the formal hardness intervention, and the
primary endpoint is a verdict rather than a certificate.

\paragraph{Architecture remains unresolved.}
All included systems are API-served autoregressive transformers. Their disagreement rules
out a common empirical ordering in this panel, but it neither identifies nor refutes a
shared architectural mechanism. Statistical-physics and proof-complexity theories still
describe the instances and classical search procedures
\citep{gamarnik2021,mezard2009,krzakala2007}; Section~\ref{sec:stepa} verifies their solver-side relevance
here. The narrower result is that this ordering fails to predict LLM verdicts at the tested
scale.

\section{Token spend also dissociates}\label{sec:compute}
Accuracy leaves open a second channel: a model might spend more test-time compute on
proof-hard inputs even when its verdicts remain poor. Prior work finds that reasoning
length often tracks heuristic difficulty weakly or encourages overthinking
\citep{thoughtterminator2025,thinkhow2025,mirage2025,morethinking2026}. We instead ask
whether the provider-reported completion-token count, a coarse output-length proxy for
test-time spend rather than an internal-compute counter, tracks the validated solver-cost axis
across aligned size bins and near-matched density.

\paragraph{Registered design and censor operationalization.}
For each family we measure Spearman correlation between token count and
$\log(1+\conf)$, then partial rank correlation controlling formula length
($n_{\mathrm{clauses}}$). The matched expander--ladder contrast uses a two-sided
Mann--Whitney test. At the model-arm level, feasibility requires a parseable verdict and
reported completion-token count on at least 60\% of rows, with budget censoring on at most
30\%. The preregistration named \texttt{finish\_reason=length} but did not
specify the censoring denominator. At 16k, the provider returned no length flags even
when counts reached the cap. The pinned scorer therefore uses a disclosed post-hoc,
conservative rule (\texttt{finish\_reason=length} or at least 95\% of the cap) and divides
by token-bearing rows: 45/171 (26.3\%) are censored, versus 0/171 under the literal flag;
both pass the 30\% gate. Correlations and medians exclude abstentions and rows censored
under the post-hoc rule. The design covers the
three instruct models, Nemotron~3 Ultra 550B at
a 16,384-token budget on the original 243 instances, and a registered 32,768-token
follow-up on larger fresh instances. The latter spans expander conflict counts of
183--11,579 and random-UNSAT counts of 16--62 in the evaluated subsample. A second
reasoning lineage, R1-Distill-Qwen-32B, fails the feasibility gate with 72.8\%
abstention and is excluded. Pinned SciPy code recomputes every statistic from raw rows;
the tests and multiplicity sensitivities are enumerated in Appendix~\ref{sec:compute-app}.

\paragraph{Instruct-model spend does not consistently follow proof hardness.}
Eight of 12 family cells have $|\rho|<.3$. The only two large raw correlations, both on
ladder ($-.58$ and $-.61$), cannot be separated from formula length because ranked
conflicts and $n_{\mathrm{clauses}}$ are collinear in those cells; the registered partial
correlations are therefore undefined, not zero. No instruct model spends more on the
proof-hard matched family: Llama~3.3 has
medians 555 versus 557 tokens (expander versus ladder); Llama~4 spends less on expander
(1,222.5 versus 1,387; raw $p=.0137$, Holm $p=.0412$ over the three model contrasts);
Mistral~3 is null ($p=.825$).

\paragraph{Reasoning-model spend also dissociates.}
At 16k, Nemotron's non-censored medians are 8,385 tokens on expander and 9,998 on
ladder, reversing the proof-hardness prediction. The expander cell's raw correlation
of $.33$ becomes $-.01$ after length control; a nominal small-size correlation does not
survive the registered three-tier correction. At 32k on harder inputs, the matched
medians reverse direction (19,976 versus 17,889) but remain non-significant ($p=.170$),
and every defined length-controlled association is negative. Across the three instruct
models and two feasible Nemotron budgets, neither the length-controlled cells nor the
matched-family contrasts show a consistent positive relation between spend and the proxy.

The most severe spending occurs on the solver-easiest UNSAT family: at 16k, all 28
non-abstained random-UNSAT rows hit the 16,384-token cap and accuracy is .036, whereas
the satisfiable control is answered at 1.00. Those censored correlations are
unmeasurable, not zero. Together, the results do not show consistent tracking of this
proof-hardness proxy on the tested configurations. They do not establish
allocation inefficiency: an allocator should optimize marginal accuracy gain, which
would require paired effort interventions that we did not run
\citep{zhai2026adaptive,coda2026}.

\section{Related work}\label{sec:related}
\paragraph{SAT and controlled reasoning diagnostics.}
Prior SAT evaluations vary the clause-to-variable ratio near the phase transition
\citep{hazra2024,hazra2025phase}, clause count \citep{satbench2025}, or size and depth
across broader reasoning tasks
\citep{g4satbench2023,beyondbench2025,faithfate2023,gsmsymbolic2024,illusion2025}.
SATQuest supplies controllable, verifier-backed generation and evaluation across instance,
problem, and format axes \citep{zhao2026satquest}; we claim neither a new SAT harness nor
generic controlled SAT evaluation. On random formulas, density and search cost co-vary.
The closest comparator is \citet{jiangcai2026}, which proposes proof
complexity as a lens on UNSAT failures and reports that clause-controlled Tseitin families
with different graph structures differ substantially in difficulty; its evidence is
family-level, with clause budgets matched while variable counts, input lengths, and
clause widths vary across families, and it names instance-level regression against
solver-based hardness signals as future work. We ask that complementary
transfer question with an instance-level, six-solver-validated expander--ladder ordering
under aligned size bins, near-matched density, and matched maximum clause width; that ordering does not consistently
transfer to model accuracy. Parameterized
2-CNF work controls different structural and surface axes \citep{essebbani2026robustness},
while matched-pair work studies SAT/UNSAT asymmetry near the threshold with a
paired-formula protocol that scopes out proof complexity
\citep{matchedpair2026}.

\paragraph{Proof complexity and architecture claims.}
The construction draws on exponential resolution lower bounds for pigeonhole
\citep{haken1985} and Tseitin formulas on expanders
\citep{tseitin1968,urquhart1987}, together with the size--width relation
\citep{bsw2001}. Statistical-physics accounts describe threshold and clustering structure
in random $k$-SAT \citep{mezard2009,krzakala2007,ding2015,gamarnik2021}; they target
algorithms, optimizers, and samplers rather than LLM verdict accuracy. Neural samplers on
spin-glass distributions are adjacent but address generative sampling efficiency
\citep{ghio2024}.

A separate literature attributes symbolic-reasoning failures to architecture
\citep{arch_limits2507}, while recurrent, recursive, diffusion, and energy-based systems
report gains on Sudoku, mazes, ARC, and related tasks
\citep{hrm2025,trm2025,ired2024,consformer2025,ebt2025}. These systems are usually
evaluated on fixed tiers or size-based distribution shifts rather than proof complexity at
near-matched density. Subsequent reanalysis also shows that apparent architectural gains can
depend on augmentation or repeated guessing \citep{renliu2026}. Our all-autoregressive panel
cannot adjudicate architecture. Indeed, a constructive result shows that a decoder-only
Transformer can decide 3-SAT with backtracking and deduction through chain-of-thought
\citep{pan2025sat}; our empirical negative is not an architecture-impossibility result.
It tests only the prior construct assumption that classical instance hardness should order
model failure.

\paragraph{Test-time compute and surface form.}
Reasoning length is often treated as an adaptive resource, yet models can overthink easy
problems \citep{thoughtterminator2025,thinkhow2025}, and longer traces need not improve
accuracy \citep{mirage2025,morethinking2026}. Allocation methods therefore estimate
difficulty or per-input budget--response curves
\citep{coda2026,zhai2026adaptive}. Verbose surface form can itself induce unnecessary
reasoning \citep{ren2026costumes}, while intermediate tokens need not be faithful traces
of an internal reasoning process \citep{kambhampati2025}. We do not propose an allocator
or equate intrinsic hardness with value of compute. The extension supplies a construct
test: does observed token spend track a solver-grounded hardness proxy when size and
density are controlled?

\paragraph{Methodological kin.}
Like controlled remeasurement work on apparent capability boundaries
\citep{schaeffer2023}, the contribution changes the measurement before changing the
model. The result is a scoped failure of construct transfer, not a universal account of
reasoning difficulty.

\section{Conclusion}
We separated clause density from a solver-grounded hardness ordering before evaluating
LLMs. The control works on Glucose: near-matched-density expander and ladder formulas separate
by up to $51\times$. It does not transfer to model verdicts. The corresponding accuracy
gap changes sign across three models, pools to $+1.7$ points, and correctness never has
the predicted negative association with solver conflicts.

A proof-preserving relabeling shows why uncontrolled family comparisons are hazardous:
all five base-formula clusters move sharply for one model, while four of five are unchanged
for another. The registered token-spend extension reaches a parallel boundary; observed spend
does not consistently increase with the validated solver-cost proxy once formula length and
censoring are considered, with the strongest censoring on the solver-easiest UNSAT family.

The methodological conclusion is narrower than a theory of reasoning failure. Classical
hardness is a property of an instance--solver pair, not an automatic measure of model
difficulty. Claims that invoke it should first demonstrate transfer under a controlled
dissociation.

\section*{Limitations}\label{sec:limits}
\paragraph{Scale and hardness construct.}
The accuracy result is read at 12--30 variables, where at least some families remain
measurable for the LLMs. The solver-side separation is already present there but grows
much larger at sizes where verdict accuracy approaches a floor. The conclusion is
therefore limited to the tested, LLM-tractable regime. Moreover, Glucose conflicts are a
solver- and implementation-specific proxy, not shortest resolution length. In the
matched Tseitin pair, graph expansion is both the source of the known resolution-hardness
difference and a structural feature visible in the serialization. The intervention does
not isolate proof length from every topological property, including the clause-length
difference at the ladder's corner vertices (Section~\ref{sec:dissociation}), and the
random $3$-regular construction is expanding with high probability rather than
per-instance certified.

\paragraph{Verdicts, witnesses, and labels.}
The harness scores a final SAT/UNSAT verdict. It does not require a SAT prediction to
include a solver-verified assignment, while UNSAT has no comparably short certificate in
this setup. A correct response may therefore be a correct guess. The high satisfiable-
control accuracy of the low-abstention models is consistent with witness availability but
does not establish that a witness was constructed or verified. Ground-truth labels are
mechanical, so this is not a human-label uncertainty; it is an endpoint-validity limit.

\paragraph{Model coverage and API reproducibility.}
The included panel contains three API-served autoregressive models from two model
families, not an architecture-diverse sample. Mistral~3 is retained despite its 29.2\%
abstention because most responses remain substantive. The fourth completed model,
\texttt{gpt-oss-120b}, is excluded from primary pooled summaries under a predeclared
abstain-dominated contingency after 77.8\% abstention. A four-model sensitivity leaves
zero models passing the gate. Provider identifiers and
timestamps are recorded, but immutable checkpoint hashes are unavailable, so exact
weight-level reproduction depends on provider stability.

\paragraph{Surface dependence is model-specific.}
The relabeling experiment has only five independent base-formula clusters, each with one
canonical and 15 permuted renderings. A post-hoc unit-of-analysis repair finds a decline
in all five Llama~3.3 clusters, but five clusters cannot reach conventional two-sided
sign-test significance ($p=.0625$); the second model changes in only one cluster. The
satisfiable-lookalike control likewise misses
the pre-specified cross-model causal-recognizability gate. These experiments identify
surface form as a material confound; they do not show that template recognition explains
all residual family differences.

\paragraph{Completion-token spend is a coarse, censored measure.}
Only Nemotron passes the reasoning-model feasibility gates; R1-Distill-Qwen-32B is
abstain-dominated. Reported completion tokens include the final answer and do not expose
internal computation; they are an output-length proxy for test-time spend. Cells
at the budget cap (especially all 28 non-abstained random-UNSAT rows at 16k) have
unmeasurable correlations rather than observed null effects. Larger budgets or other
reasoning models could produce different coupling. Finally, proof hardness is not the
same as the marginal value of additional compute. Because we do not run each instance
under paired effort interventions, the token results cannot estimate allocation regret
or identify an optimal policy.

\paragraph{Claim boundary.}
These results do not establish general claims about LLM reasoning, proof structure, or
architecture. At this scale, the tested ordering and conflict proxy predict neither
verdict accuracy on these families nor spontaneous token spend on feasible configurations.

\ifvenueblackbox
\else
  \bibliographystyle{plainnat}
\fi
\bibliography{references}

\begin{thebibliography}{38}
\providecommand{\natexlab}[1]{#1}

\bibitem[{Ben-Sasson and Wigderson(2001)}]{bsw2001}
Eli Ben-Sasson and Avi Wigderson. 2001.
\newblock \href {https://doi.org/10.1145/375827.375835} {Short proofs are
  narrow---resolution made simple}.
\newblock \emph{Journal of the ACM}, 48(2):149--169.

\bibitem[{Ding et~al.(2015)Ding, Sly, and Sun}]{ding2015}
Jian Ding, Allan Sly, and Nike Sun. 2015.
\newblock \href {https://doi.org/10.1145/2746539.2746619} {Proof of the
  satisfiability conjecture for large $k$}.
\newblock In \emph{Proceedings of the 47th Annual ACM Symposium on Theory of
  Computing}, pages 59--68. Association for Computing Machinery.

\bibitem[{Du et~al.(2024)Du, Mao, and Tenenbaum}]{ired2024}
Yilun Du, Jiayuan Mao, and Joshua~B. Tenenbaum. 2024.
\newblock \href {https://proceedings.mlr.press/v235/du24f.html} {Learning
  iterative reasoning through energy diffusion}.
\newblock In \emph{Proceedings of the 41st International Conference on Machine
  Learning}, volume 235 of \emph{Proceedings of Machine Learning Research},
  pages 11764--11776. PMLR.

\bibitem[{Dziri et~al.(2023)Dziri, Lu, Sclar, Li, Jiang, Lin, Welleck, West,
  Bhagavatula, Le~Bras, Hwang, Sanyal, Ren, Ettinger, Harchaoui, and
  Choi}]{faithfate2023}
Nouha Dziri, Ximing Lu, Melanie Sclar, Xiang~(Lorraine) Li, Liwei Jiang,
  Bill~Yuchen Lin, Sean Welleck, Peter West, Chandra Bhagavatula, Ronan
  Le~Bras, Jena Hwang, Soumya Sanyal, Xiang Ren, Allyson Ettinger, Zaid
  Harchaoui, and Yejin Choi. 2023.
\newblock \href {https://doi.org/10.52202/075280-3081} {Faith and fate: Limits
  of transformers on compositionality}.
\newblock In \emph{Advances in Neural Information Processing Systems},
  volume~36, pages 70293--70332. Curran Associates, Inc.

\bibitem[{Es-sebbani et~al.(2026)Es-sebbani, Marquer, Salhi, and
  Bouraoui}]{essebbani2026robustness}
Na{\"i}m Es-sebbani, Esteban Marquer, Yakoub Salhi, and Zied Bouraoui. 2026.
\newblock \href {https://doi.org/10.48550/arXiv.2602.12665} {Evaluating
  robustness of reasoning models on parameterized logical problems}.
\newblock \emph{arXiv preprint arXiv:2602.12665}.

\bibitem[{Gamarnik(2021)}]{gamarnik2021}
David Gamarnik. 2021.
\newblock \href {https://doi.org/10.1073/pnas.2108492118} {The overlap gap
  property: A topological barrier to optimizing over random structures}.
\newblock \emph{Proceedings of the National Academy of Sciences},
  118(41):e2108492118.

\bibitem[{Ghio et~al.(2024)Ghio, Dandi, Krzakala, and Zdeborov{\'a}}]{ghio2024}
Davide Ghio, Yatin Dandi, Florent Krzakala, and Lenka Zdeborov{\'a}. 2024.
\newblock \href {https://doi.org/10.1073/pnas.2311810121} {Sampling with flows,
  diffusion, and autoregressive neural networks from a spin-glass perspective}.
\newblock \emph{Proceedings of the National Academy of Sciences},
  121(27):e2311810121.

\bibitem[{Ghosal et~al.(2025)Ghosal, Chakraborty, Reddy, Lu, Wang, Manocha,
  Huang, Ghavamzadeh, and Bedi}]{mirage2025}
Soumya~Suvra Ghosal, Souradip Chakraborty, Avinash Reddy, Yifu Lu, Mengdi Wang,
  Dinesh Manocha, Furong Huang, Mohammad Ghavamzadeh, and Amrit~Singh Bedi.
  2025.
\newblock \href
  {https://proceedings.neurips.cc/paper_files/paper/2025/hash/fc067ac218430c409d6f65403328f740-Abstract-Conference.html}
  {Does thinking more always help? mirage of test-time scaling in reasoning
  models}.
\newblock In \emph{Advances in Neural Information Processing Systems},
  volume~38, pages 172664--172691. Curran Associates, Inc.

\bibitem[{Gladstone et~al.(2025)Gladstone, Nanduru, Islam, Han, Ha, Chadha, Du,
  Ji, Li, and Iqbal}]{ebt2025}
Alexi Gladstone, Ganesh Nanduru, Md~Mofijul Islam, Peixuan Han, Hyeonjeong Ha,
  Aman Chadha, Yilun Du, Heng Ji, Jundong Li, and Tariq Iqbal. 2025.
\newblock \href {https://doi.org/10.48550/arXiv.2507.02092} {Energy-based
  transformers are scalable learners and thinkers}.
\newblock \emph{arXiv preprint arXiv:2507.02092}.

\bibitem[{Haken(1985)}]{haken1985}
Armin Haken. 1985.
\newblock \href {https://doi.org/10.1016/0304-3975(85)90144-6} {The
  intractability of resolution}.
\newblock \emph{Theoretical Computer Science}, 39:297--308.

\bibitem[{Hazra et~al.(2024)Hazra, Venturato, Zuidberg Dos~Martires, and
  De~Raedt}]{hazra2024}
Rishi Hazra, Gabriele Venturato, Pedro Zuidberg Dos~Martires, and Luc De~Raedt.
  2024.
\newblock \href {https://doi.org/10.48550/arXiv.2408.07215} {Can large language
  models reason? a characterization via 3-sat}.
\newblock \emph{arXiv preprint arXiv:2408.07215}.

\bibitem[{Hazra et~al.(2025)Hazra, Venturato, Zuidberg Dos~Martires, and
  De~Raedt}]{hazra2025phase}
Rishi Hazra, Gabriele Venturato, Pedro Zuidberg Dos~Martires, and Luc De~Raedt.
  2025.
\newblock \href {https://openreview.net/forum?id=MPTlWIVSMU} {Have large
  language models learned to reason? a characterization via 3-sat}.
\newblock In \emph{Second Conference on Language Modeling}.

\bibitem[{Jiang and Cai(2026)}]{jiangcai2026}
Tao Jiang and Shaowei Cai. 2026.
\newblock \href {https://openreview.net/forum?id=rv1SR32kl9} {Beyond clause
  count: A study of proof-relevant difficulty in {LLM} {SAT} reasoning}.
\newblock In \emph{ICLR 2026 Workshop on Logical Reasoning of Large Language
  Models}.

\bibitem[{Jolicoeur-Martineau(2025)}]{trm2025}
Alexia Jolicoeur-Martineau. 2025.
\newblock \href {https://doi.org/10.48550/arXiv.2510.04871} {Less is more:
  Recursive reasoning with tiny networks}.
\newblock \emph{arXiv preprint arXiv:2510.04871}.

\bibitem[{Kambhampati et~al.(2026)Kambhampati, Valmeekam, Bhambri, Palod,
  Saldyt, Stechly, Samineni, Kalwar, and Biswas}]{kambhampati2025}
Subbarao Kambhampati, Karthik Valmeekam, Siddhant Bhambri, Vardhan Palod, Lucas
  Saldyt, Kaya Stechly, Soumya~Rani Samineni, Durgesh Kalwar, and Upasana
  Biswas. 2026.
\newblock \href {https://icml.cc/virtual/2026/poster/67077} {Position: Stop
  anthropomorphizing intermediate tokens as reasoning/thinking traces!}
\newblock In \emph{Proceedings of the 43rd International Conference on Machine
  Learning}.

\bibitem[{Krzakala et~al.(2007)Krzakala, Montanari, Ricci-Tersenghi, Semerjian,
  and Zdeborov{\'a}}]{krzakala2007}
Florent Krzakala, Andrea Montanari, Federico Ricci-Tersenghi, Guilhem
  Semerjian, and Lenka Zdeborov{\'a}. 2007.
\newblock \href {https://doi.org/10.1073/pnas.0703685104} {Gibbs states and the
  set of solutions of random constraint satisfaction problems}.
\newblock \emph{Proceedings of the National Academy of Sciences},
  104(25):10318--10323.

\bibitem[{Li et~al.(2024)Li, Guo, and Si}]{g4satbench2023}
Zhaoyu Li, Jinpei Guo, and Xujie Si. 2024.
\newblock \href {https://openreview.net/forum?id=7VB5db72lr} {{G4SATBench}:
  Benchmarking and advancing {SAT} solving with graph neural networks}.
\newblock \emph{Transactions on Machine Learning Research}.

\bibitem[{Liu et~al.(2026)Liu, Li, Ma, Zhang, and Guo}]{thinkhow2025}
Yongjiang Liu, Haoxi Li, Xiaosong Ma, Jie Zhang, and Song Guo. 2026.
\newblock \href {https://doi.org/10.18653/v1/2026.acl-long.1766} {Think how to
  think: Mitigating overthinking with autonomous difficulty cognition in large
  reasoning models}.
\newblock In \emph{Proceedings of the 64th Annual Meeting of the Association
  for Computational Linguistics (Volume 1: Long Papers)}, pages 38105--38126.
  Association for Computational Linguistics.

\bibitem[{M{\'e}zard and Montanari(2009)}]{mezard2009}
Marc M{\'e}zard and Andrea Montanari. 2009.
\newblock \href {https://doi.org/10.1093/acprof:oso/9780198570837.001.0001}
  {\emph{Information, Physics, and Computation}}.
\newblock Oxford University Press.

\bibitem[{Mirzadeh et~al.(2025)Mirzadeh, Alizadeh-Vahid, Shahrokhi, Tuzel,
  Bengio, and Farajtabar}]{gsmsymbolic2024}
Iman Mirzadeh, Keivan Alizadeh-Vahid, Hooman Shahrokhi, Oncel Tuzel, Samy
  Bengio, and Mehrdad Farajtabar. 2025.
\newblock \href
  {https://proceedings.iclr.cc/paper_files/paper/2025/hash/ec2e7a896f8250986b3907f57621ce94-Abstract-Conference.html}
  {{GSM-Symbolic}: Understanding the limitations of mathematical reasoning in
  large language models}.
\newblock In \emph{The Thirteenth International Conference on Learning
  Representations}.

\bibitem[{Pan et~al.(2025)Pan, Ganesh, Abernethy, Esposo, and Lee}]{pan2025sat}
Leyan Pan, Vijay Ganesh, Jacob Abernethy, Chris Esposo, and Wenke Lee. 2025.
\newblock \href {https://proceedings.mlr.press/v267/pan25d.html} {Can
  transformers reason logically? a study in {SAT} solving}.
\newblock In \emph{Proceedings of the 42nd International Conference on Machine
  Learning}, volume 267 of \emph{Proceedings of Machine Learning Research},
  pages 47632--47671. PMLR.

\bibitem[{Pu et~al.(2025)Pu, Saxon, Hua, and Wang}]{thoughtterminator2025}
Xiao Pu, Michael Saxon, Wenyue Hua, and William~Yang Wang. 2025.
\newblock \href {https://openreview.net/forum?id=oHR862dpMC}
  {{THOUGHTTERMINATOR}: Benchmarking, calibrating, and mitigating overthinking
  in reasoning models}.
\newblock In \emph{Second Conference on Language Modeling}.

\bibitem[{Ren et~al.(2026)Ren, Zhang, Chen, Shen, Li, Zhang, Cao, and
  Ji}]{ren2026costumes}
Junnan Ren, Yan Zhang, Qian Chen, Yunhang Shen, Ke~Li, Shengchuan Zhang,
  Liujuan Cao, and Rongrong Ji. 2026.
\newblock \href {https://icml.cc/virtual/2026/poster/62755} {When simple
  problems wear complex costumes: Improving efficiency in {LRM}'s adaptive
  reasoning}.
\newblock In \emph{Proceedings of the 43rd International Conference on Machine
  Learning}.

\bibitem[{Ren and Liu(2026)}]{renliu2026}
Zirui Ren and Ziming Liu. 2026.
\newblock \href {https://doi.org/10.48550/arXiv.2601.10679} {Are your reasoning
  models reasoning or guessing? a mechanistic analysis of hierarchical
  reasoning models}.
\newblock \emph{arXiv preprint arXiv:2601.10679}.

\bibitem[{Schaeffer et~al.(2023)Schaeffer, Miranda, and Koyejo}]{schaeffer2023}
Rylan Schaeffer, Brando Miranda, and Sanmi Koyejo. 2023.
\newblock \href {https://doi.org/10.52202/075280-2425} {Are emergent abilities
  of large language models a mirage?}
\newblock In \emph{Advances in Neural Information Processing Systems},
  volume~36, pages 55565--55581. Curran Associates, Inc.

\bibitem[{Shojaee et~al.(2025)Shojaee, Mirzadeh, Alizadeh~Vahid, Horton,
  Bengio, and Farajtabar}]{illusion2025}
Parshin Shojaee, Iman Mirzadeh, Keivan Alizadeh~Vahid, Maxwell Horton, Samy
  Bengio, and Mehrdad Farajtabar. 2025.
\newblock \href
  {https://proceedings.neurips.cc/paper_files/paper/2025/hash/9b26ad15462c81548c0689188d2e8018-Abstract-Conference.html}
  {The illusion of thinking: Understanding the strengths and limitations of
  reasoning models via the lens of problem complexity}.
\newblock In \emph{Advances in Neural Information Processing Systems},
  volume~38, pages 108018--108059. Curran Associates, Inc.

\bibitem[{Srivastava et~al.(2026)Srivastava, Hussain, Bi, Roy, Pitre, Lu,
  Ziyadi, and Wang}]{beyondbench2025}
Gaurav Srivastava, Aafiya Hussain, Zhenyu Bi, Swastik Roy, Priya Pitre, Meng
  Lu, Morteza Ziyadi, and Xuan Wang. 2026.
\newblock \href {https://iclr.cc/virtual/2026/poster/10007593} {{BeyondBench}:
  Contamination-resistant evaluation of reasoning in language models}.
\newblock In \emph{The Fourteenth International Conference on Learning
  Representations}.

\bibitem[{Tseitin(1968)}]{tseitin1968}
G.~S. Tseitin. 1968.
\newblock \href {https://www.mathnet.ru/eng/znsl2268} {On the complexity of
  proof in propositional calculus}.
\newblock In \emph{Studies in Constructive Mathematics and Mathematical Logic,
  Part II}, volume~8 of \emph{Zapiski Nauchnykh Seminarov LOMI}, pages
  234--259. Nauka, Leningrad Department.

\bibitem[{Urquhart(1987)}]{urquhart1987}
Alasdair Urquhart. 1987.
\newblock \href {https://doi.org/10.1145/7531.8928} {Hard examples for
  resolution}.
\newblock \emph{Journal of the ACM}, 34(1):209--219.

\bibitem[{Wang et~al.(2025)Wang, Li, Sun, Chen, Liu, Wu, Lu, Song, and
  Abbasi-Yadkori}]{hrm2025}
Guan Wang, Jin Li, Yuhao Sun, Xing Chen, Changling Liu, Yue Wu, Meng Lu, Sen
  Song, and Yasin Abbasi-Yadkori. 2025.
\newblock \href {https://doi.org/10.48550/arXiv.2506.21734} {Hierarchical
  reasoning model}.
\newblock \emph{arXiv preprint arXiv:2506.21734}.

\bibitem[{Wei et~al.(2025)Wei, Wu, Wan, Suresh, Tan, Zhou, Koyejo, Wang, and
  Aiken}]{satbench2025}
Anjiang Wei, Yuheng Wu, Yingjia Wan, Tarun Suresh, Huanmi Tan, Zhanke Zhou,
  Sanmi Koyejo, Ke~Wang, and Alex Aiken. 2025.
\newblock \href {https://doi.org/10.18653/v1/2025.emnlp-main.1716} {{SATB}ench:
  Benchmarking {LLM}s' logical reasoning via automated puzzle generation from
  {SAT} formulas}.
\newblock In \emph{Proceedings of the 2025 Conference on Empirical Methods in
  Natural Language Processing}, pages 33832--33849. Association for
  Computational Linguistics.

\bibitem[{Wu et~al.(2026)Wu, Xie, Zhang, and Xiao}]{coda2026}
Siye Wu, Jian Xie, Yikai Zhang, and Yanghua Xiao. 2026.
\newblock \href {https://doi.org/10.48550/arXiv.2603.08659} {{CODA}:
  Difficulty-aware compute allocation for adaptive reasoning}.
\newblock \emph{arXiv preprint arXiv:2603.08659}.

\bibitem[{Xu et~al.(2025)Xu, Li, Sanner, and Khalil}]{consformer2025}
Yudong Xu, Wenhao Li, Scott Sanner, and Elias~Boutros Khalil. 2025.
\newblock \href {https://proceedings.mlr.press/v267/xu25q.html}
  {Self-supervised transformers as iterative solution improvers for constraint
  satisfaction}.
\newblock In \emph{Proceedings of the 42nd International Conference on Machine
  Learning}, volume 267 of \emph{Proceedings of Machine Learning Research},
  pages 69432--69450. PMLR.

\bibitem[{Zhai et~al.(2026)Zhai, Li, Xiao, Li, and Wang}]{zhai2026adaptive}
Zhiyuan Zhai, Bingcong Li, Bingnan Xiao, Ming Li, and Xin Wang. 2026.
\newblock \href {https://doi.org/10.48550/arXiv.2604.14853} {Adaptive test-time
  compute allocation for reasoning {LLM}s via constrained policy optimization}.
\newblock \emph{arXiv preprint arXiv:2604.14853}.

\bibitem[{Zhang et~al.(2026)Zhang, Chen, and Chen}]{matchedpair2026}
Leizhen Zhang, Shuhan Chen, and Sheng Chen. 2026.
\newblock \href {https://doi.org/10.1145/3808209} {Satisfiability solving with
  llms: A matched-pair evaluation of reasoning capability}.
\newblock \emph{Proceedings of the ACM on Software Engineering},
  3(FSE):4599--4621.

\bibitem[{Zhang(2025)}]{arch_limits2507}
Zheng Zhang. 2025.
\newblock \href {https://openreview.net/forum?id=Gz5HMiJLqv} {Comprehension
  without competence: Architectural limits of llms in symbolic computation and
  reasoning}.
\newblock \emph{Transactions on Machine Learning Research}.

\bibitem[{Zhao et~al.(2026)Zhao, Li, Bo, Takezoe, Hui, Guang, {Renlei}, Qin,
  and Long}]{zhao2026satquest}
Yanxiao Zhao, Yaqian Li, Zi-Hao Bo, Rinyoichi Takezoe, Haojia Hui, Mo~Guang,
  {Renlei}, Xiaolin Qin, and Kaiwen Long. 2026.
\newblock \href {https://doi.org/10.18653/v1/2026.acl-long.96} {{SATQ}uest: A
  verifier for logical reasoning evaluation and reinforcement fine-tuning of
  {LLM}s}.
\newblock In \emph{Proceedings of the 64th Annual Meeting of the Association
  for Computational Linguistics (Volume 1: Long Papers)}, pages 2109--2131.
  Association for Computational Linguistics.

\bibitem[{Zhou et~al.(2026)Zhou, Ling, Chen, Wang, Fan, and
  Wang}]{morethinking2026}
Shu Zhou, Rui Ling, Junan Chen, Xin Wang, Tao Fan, and Hao Wang. 2026.
\newblock \href {https://doi.org/10.18653/v1/2026.findings-acl.1199} {When more
  thinking hurts: Overthinking in {LLM} test-time compute scaling}.
\newblock In \emph{Findings of the Association for Computational Linguistics:
  ACL 2026}, pages 23967--23977. Association for Computational Linguistics.

\end{thebibliography}

\clearpage
\appendix
\section{Instance families and solver validation}

\begin{table}[ht]\centering\footnotesize\setlength{\tabcolsep}{3pt}
\caption{Instance families. Ground truth is mechanical. Theoretical resolution hardness
is established for the canonical hard families; Glucose conflicts are the empirical,
solver-specific cost proxy.}
\label{tab:families}
\begin{tabular}{@{}lll@{}}
\toprule
family & label/status & role \\
\midrule
expander & UNSAT, exp. & hard, C1/C2 \\
ladder & UNSAT, poly. & easy, C1 \\
PHP & UNSAT, exp. & hard, C3 \\
random-UNSAT & UNSAT, low conflict & easy, C2/C3 \\
random-SAT & SAT, no refutation & base-rate control \\
\bottomrule
\end{tabular}
\end{table}

\paragraph{Surface-intervention invariance.}
\begin{proposition}\label{prop:surface-invariance}
Let $T$ bijectively rename a CNF formula's variables and arbitrarily reorder its clauses
and the literals within each clause. Then $F$ and $T(F)$ have the same satisfiability
status and the same minimum resolution-refutation length and width.
\end{proposition}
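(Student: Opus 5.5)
The plan is to fix the formal objects first and then reduce everything to the fact that a renaming is an isomorphism of the relevant structures. Treat a CNF formula $F$ as a finite set of clauses, each clause a finite set of literals. Under this standard convention, reordering clauses and reordering literals within a clause are not operations on $F$ at all; they only change the serialization. So I will write $T=\sigma$ for the induced map on formulas, where $\sigma$ is the variable bijection extended to literals by $\sigma(\neg x)=\neg\sigma(x)$. If the paper's serialization allows duplicate literals or clauses, I will first normalize to the set representation. Duplicates are already set-identical, and the reorderings do not create or remove them.

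For satisfiability, take an assignment $\beta$ satisfying $F$ and define $\beta'=\beta\circ\sigma^{-1}$. A literal $\ell$ is true under $\beta$ exactly when $\sigma(\ell)$ is true under $\beta'$, so a clause $C$ is satisfied by $\beta$ exactly when $\sigma(C)$ is satisfied by $\beta'$. Hence $T(F)$ is satisfiable iff $F$ is, and the converse direction follows by applying the same argument to $\sigma^{-1}$.

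For resolution, I will show that $\sigma$ maps refutations to refutations while preserving length and clause widths. Take a refutation $C_1,\dots,C_L=\emptyset$ of $F$. Each line is either an axiom in $F$ or a resolvent $C_i=(A\setminus\{x\})\cup(B\setminus\{\neg x\})$ of earlier lines $A\ni x$ and $B\ni\neg x$. Applying $\sigma$ line by line gives $\sigma(C_1),\dots,\sigma(C_L)$. Axioms map into $T(F)$. Resolution steps map to resolution steps on the pivot $\sigma(x)$, because $\sigma$ commutes with set difference and union and respects complementation. The empty clause maps to itself. Since $\sigma$ is injective on literals, $|\sigma(C)|=|C|$, so the image has the same number of lines and the same maximum width. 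If the paper's width notion subtracts the initial width, that is also preserved. Applying the construction to $\sigma^{-1}$ gives the reverse inequalities, so the minimum length and the minimum width are both equal for $F$ and $T(F)$.

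I expect the only real obstacle to be definitional rather than mathematical. One must check that literal reordering cannot matter for resolution, in particular that clauses are treated as sets, so the resolvent and weakening-free rule are order-independent. One must also check that length is counted in lines rather than in serialized symbols. I would state this convention explicitly at the outset, since with a symbol-count measure a relabeling could change the encoding length.
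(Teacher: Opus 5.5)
Your proposal is correct and follows essentially the same route as the paper: compose the satisfying assignment with the inverse renaming, map each resolution inference line by line under the renaming (with reordering invisible once clauses are treated as sets of literals), and obtain the reverse inequalities by applying $T^{-1}$. Your definitional care makes explicit what the paper leaves implicit: clauses as sets, and length counted in lines rather than serialized symbols.
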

\begin{proof}
A satisfying assignment for $F$ composes with the inverse renaming to satisfy $T(F)$,
and conversely. A variable renaming also maps every resolution inference for $F$ to an
inference for $T(F)$ with the same clause width; reordering does not change a clause as
a set of literals. Thus any refutation maps to one of equal length and width. Applying
$T^{-1}$ gives the reverse inequalities, so both minima are equal.
\end{proof}

\begin{table}[ht]\centering\footnotesize\setlength{\tabcolsep}{1.8pt}
\caption{Full solver-scaling table: median Glucose conflicts. Nondeterministic
families contain 10 generated instances per cell; PHP has one canonical formula per
size. The ladder sizes are 10, 16, 22, 34, 46, and 58 variables and are paired to the
nearest expander size bin. Figure~\ref{fig:money}A shows the corresponding
pre-specified median-conflict ratios (Section~\ref{sec:stepa}).}
\label{tab:stepa-full}
\begin{tabular}{@{}lrrrrrr@{}}
\toprule
family ($\alpha$) & \multicolumn{6}{c}{median conflicts at $\approx n$ variables}\\
 & 12 & 18 & 24 & 36 & 48 & 60 \\
\midrule
expander & 28.5 & 68.5 & 113 & 550 & 2,551 & 5,550 \\
ladder & 8 & 20 & 33 & 59 & 84 & 117 \\
rnd-UNSAT & 6.5 & 7.5 & 11 & 16.5 & 25.5 & 43.5 \\
rnd-SAT & 1 & 0 & 2.5 & 1.5 & 2 & 2 \\
\midrule
PHP & 7 & 30$^{a}$ & 157$^{b}$ & 888$^{c}$ & -- & 7,100$^{d}$ \\
\bottomrule
\end{tabular}

\smallskip
\footnotesize $^{a}$20 variables; $^{b}$30 variables; $^{c}$42 variables; $^{d}$56 variables.
\end{table}

\begin{table}[ht]\centering\footnotesize\setlength{\tabcolsep}{3.5pt}
\caption{Post-primary multi-solver sensitivity: within-solver mean-conflict ratios
(expander/ladder) on the frozen rows. All 1,470 solver-instance labels agree. Raw
conflict scales are not compared across implementations.}
\label{tab:multisolver}
\begin{tabular}{@{}lrrr@{}}
\toprule
solver & $\sim$36 vars & $\sim$48 vars & $\sim$60 vars \\
\midrule
Glucose 4.2 & 10.2 & 28.5 & 51.4 \\
CaDiCaL 1.9.5 & 24.1 & 73.3 & 215.4 \\
MiniSat 2.2 & 11.2 & 47.7 & 227.4 \\
MapleChrono & 30.6 & 112.3 & 105.9 \\
Lingeling & 18.4 & 43.5 & 165.2 \\
MergeSat 3 & 7.0 & 18.6 & 32.2 \\
\bottomrule
\end{tabular}
\end{table}

\section{Full verdict-accuracy intervals}

Table~\ref{tab:phase1-full} reports per-family verdict accuracies with Wilson 95\%
intervals for the three included models; each block ends with its pre-specified C1
gap and raw two-sided $p$-value.

\begin{table}[t]\centering\footnotesize\setlength{\tabcolsep}{4pt}
\caption{Per-family verdict accuracy with Wilson 95\% confidence intervals. Abstentions
remain in the denominator.}
\label{tab:phase1-full}
\begin{tabular}{@{}lr@{}}
\toprule
model/family & accuracy [95\% CI] \\
\midrule
\multicolumn{2}{@{}l}{\emph{Llama 3.3}} \\
\quad random-SAT & .92 [.82, .96] \\
\quad expander / ladder & .18 [.11, .30] / .38 [.27, .51] \\
\quad random-UNSAT / PHP & .27 [.17, .39] / 1.00 [.44, 1.00] \\
\quad C1 gap & +.20 ($p=.015$) \\
\midrule
\multicolumn{2}{@{}l}{\emph{Llama 4}} \\
\quad random-SAT & 1.00 [.94, 1.00] \\
\quad expander / ladder & .53 [.41, .65] / .22 [.13, .34] \\
\quad random-UNSAT / PHP & .00 [.00, .06] / .33 [.06, .79] \\
\quad C1 gap & $-$.32 ($p=.00034$) \\
\midrule
\multicolumn{2}{@{}l}{\emph{Mistral 3}} \\
\quad random-SAT & .38 [.27, .51] \\
\quad expander / ladder & .30 [.20, .43] / .47 [.35, .59] \\
\quad random-UNSAT / PHP & .02 [.00, .09] / 1.00 [.44, 1.00] \\
\quad C1 gap & +.17 ($p=.060$) \\
\bottomrule
\end{tabular}
\end{table}

\section{Token-spend tests and sensitivities}\label{sec:compute-app}

All Spearman and tie-corrected Mann--Whitney $p$-values are SciPy asymptotic values.
The preregistered censor rule named \texttt{finish\_reason=length} and left its denominator
unstated. Because the 16k provider returned no such flags at the cap, the pinned analysis
adds a post-hoc threshold at 95\% of the cap and divides by token-bearing rows. This marks
45/171 16k rows as censored (26.3\%); the literal flag marks zero, so either rule passes
the 30\% feasibility threshold. Reported correlations and medians exclude both abstentions
and rows censored by the post-hoc rule.
The instruct-model expander--ladder analysis contains three registered tests and uses
Holm adjustment. The Nemotron cross-budget view contains two expander--ladder contrasts;
the manuscript reports both raw values and a conservative two-test Bonferroni sensitivity.
The 32k within-family correlation family contains four registered tests and uses a
four-test Bonferroni sensitivity.

In the instruct-model ladder cells, ranked conflicts and ranked clause count are collinear.
The registered partial correlations are therefore undefined; floating-point residuals from
regressing one rank vector on the other are not interpretable coefficients.

At 16k, the Nemotron expander correlation is $\rho=.33$ before and $-.01$ after the
formula-length control. Within fixed size tiers, the only nominal association is at 12
variables ($\rho=.48$, raw $p=.0355$), which does not survive the three-tier Holm
correction ($p=.107$) and does not recur at 18 or 24 variables ($-.01$ and $.04$).
On the 32k set, treating capped random-UNSAT rows as observed at the cap gives
$\rho=-.507$ (raw $p=.0190$, four-test Bonferroni $p=.0761$); excluding those rows gives
$\rho=-.0175$ ($p=.957$). The cell is therefore reported as censored, not as a negative
association.

An unregistered character-count sensitivity gives a matched-family difference in the
expander direction (raw $p=.000553$), but characters per token also differ by family
(1.98 versus 1.59). Character count consequently conflates provider-token compute with
output style and is not used as evidence for a compute gap.

\section{Reproduction inventory}\label{sec:repro-app}

The deterministic release maps manuscript claims to pinned receipts:
\begin{itemize}\setlength{\itemsep}{0pt}\raggedright
  \item \path{data/results/solver_scaling_summary.json} (solver validation);
  \item \path{data/results/primary_three_model_summary.json} (accuracy);
  \item \path{data/results/surface_cluster_summary.json} (clustered surface
        control);
  \item \path{data/results/multi_solver_summary.json} and
        \path{data/results/multi_solver_rows.jsonl} (solver sensitivity);
  \item \path{data/results/token_spend_summary.json} (token-spend tests).
\end{itemize}
The release also contains the instance generators, DIMACS renderer, Glucose
scorer, provider-response parser, figure scripts, held-out seed bands, and a
claim-to-source manifest. API calls cannot be replayed against immutable
weights when providers do not publish checkpoint hashes. The released
per-instance derived measurements and mechanical labels remain re-analyzable;
generated response text is not redistributed.

\paragraph{Protocol provenance.}
The Phase~1 and surface-control protocols are author-recorded; we cannot
show that they preceded their results, so we do not call them
preregistrations. The token-spend protocol was recorded before its result
collection. The release preserves every protocol file alongside its
results.

The included verdict-model identifiers are
\begin{itemize}\setlength{\itemsep}{0pt}
  \item \texttt{llama-3.3-70b-instruct};
  \item \texttt{llama-4-maverick-17b-128e-instruct};
  \item \texttt{mistral-large-3-675b-instruct-2512}.
\end{itemize}

\end{document}